\theoremstyle{plain}
\newtheorem{theorem}{Theorem}[section]
\newtheorem{lemma}[theorem]{Lemma}
\theoremstyle{definition}
\theoremstyle{remark}
\icmltitlerunning{Efficient Parallel Training Methods for Spiking Neural Networks with Constant Time Complexity}
\begin{document}
	
\twocolumn[
\icmltitle{Efficient Parallel Training Methods for Spiking Neural Networks with Constant Time Complexity}




\begin{icmlauthorlist}
	\icmlauthor{Wanjin Feng}{IME,UCAS}
	\icmlauthor{Xingyu Gao$^*$}{IME,UCAS}
	\icmlauthor{Wenqian Du}{IME,UCAS}
	\icmlauthor{Hailong Shi}{IME}
	\icmlauthor{Peilin Zhao}{Tencent}
	\icmlauthor{Pengcheng Wu}{NTU}
	\icmlauthor{Chunyan Miao}{NTU}
\end{icmlauthorlist}

\icmlaffiliation{IME}{Institute of Microelectronics,  Chinese Academy of Sciences, Beijing, China}
\icmlaffiliation{UCAS}{University of Chinese Academy of Sciences, Beijing, China}
\icmlaffiliation{Tencent}{Tencent AI Lab, Shenzhen, China}
\icmlaffiliation{NTU}{Nanyang Technological University, Singapore}

\icmlcorrespondingauthor{Xingyu Gao}{gxy9910@gmail.com}

\icmlkeywords{Machine Learning, ICML}

\vskip 0.3in
]



\printAffiliationsAndNotice{} 


\begin{abstract}
	Spiking Neural Networks (SNNs) often suffer from high time complexity $O(T)$ due to the sequential processing of $T$ spikes, making training computationally expensive. 
	In this paper, we propose a novel Fixed-point Parallel Training (FPT) method to accelerate SNN training without modifying the network architecture or introducing additional assumptions.
	FPT reduces the time complexity to $O(K)$, where $K$ is a small constant (usually $K=3$), by using a fixed-point iteration form of Leaky Integrate-and-Fire (LIF) neurons for all $T$ timesteps.
	We provide a theoretical convergence analysis of FPT and demonstrate that existing parallel spiking neurons can be viewed as special cases of our proposed method. 
	Experimental results show that FPT effectively simulates the dynamics of original LIF neurons, significantly reducing computational time without sacrificing accuracy. 
	This makes FPT a scalable and efficient solution for real-world applications, particularly for long-term tasks.
	Our code will be released at \href{https://github.com/WanjinVon/FPT}{\texttt{https://github.com/WanjinVon/FPT}}.
\end{abstract}





\section{Introduction}


SNNs represent a biologically inspired evolution of artificial neural networks (ANNs) \cite{zhu2024-online, zheng2024-temporal}.
Unlike traditional ANNs that rely on continuous-value propagation, SNNs utilize discrete spikes, mimicking the way the brain processes information \cite{wang2024-braininspired}. 
This unique spike-based computation offers several advantages, including improved energy efficiency through sparse activation, robustness to noise, and the ability to process spatiotemporal data effectively \cite{ma2023-exploiting, singh2022-quantum, cao2015-spiking}.
Consequently, SNNs have emerged as a promising bridge between neuroscience and computational science, gaining significant research interest in recent years \cite{zhang2023-braininspired, yin2023-accurate}.

Despite their potential, SNNs often require a large number of timesteps to achieve optimal performance \cite{ding2024-shrinking}.
For instance, neuromorphic benchmark datasets such as HAR-DVS,  DVS-CIFAR10, and DVS-Gesture typically need 10 or more timesteps to reach satisfactory accuracy \cite{wang2024-hardvs, zhuge2024-efficient, jiang2024-ndot}. 
While longer timesteps enable the network to capture richer temporal information and improve accuracy, they also introduce significant computational overhead \cite{zhang2023-low}.
The sequential processing of spikes across $T$ timesteps increases simulation time, slowing down both training and inference, resulting in a time complexity of $O(T)$ \cite{wang2023-highaccuracy, kim2023-exploring}.
Moreover, traditional SNN training using BackPropagation Through Time (BPTT) struggles to fully utilize the parallel processing capabilities of modern hardware, such as GPUs, exacerbating computational inefficiencies
\cite{stan2024-learning, hu2024-highperformance}.
This limitation becomes especially pronounced in long-term tasks \cite{yao2023-inherent}.


To address these challenges, we introduce the Fixed-point Parallel Training (FPT) method, which leverages the parallel processing capabilities of modern hardware to significantly accelerate SNN training.
By employing a fixed-point iteration framework, FPT decouples sequential dependencies, enabling simultaneous computation across all $T$ timesteps.
This reduces the time complexity from $O(T)$ to $O(K)$, where $K$ is the number of fixed-point iterations and typically a small constant.
Importantly, FPT preserves essential neural dynamics, including the reset mechanism, ensuring both accuracy and biological interpretability during training.
We provide a theoretical analysis to prove the convergence of FPT and demonstrate that existing parallel spiking neuron models can be interpreted as specific instances of our framework.
Experimentally, FPT achieves better performance
while significantly reducing computational time, making it a scalable and efficient solution for real-world applications.

The main contributions of this paper are as follows:
\begin{itemize}
	\item We propose a novel Fixed-point Parallel Training (FPT) method that reduces the training time complexity of SNNs from \(O(T)\) to \(O(K)\), enabling efficient parallel processing while preserving all neural dynamics.
	\item We provide a theoretical analysis proving the convergence of FPT and demonstrate that existing parallel spiking neuron models can be derived as special cases of our method.
	\item Experimental results demonstrate that FPT retains the dynamic properties of original LIF neurons, and significantly reduces computational time, addressing the bottlenecks of long-term SNN training.
\end{itemize}

\section{Related Work}

\subsection{Training Acceleration for SNNs}

Various approaches have been proposed to reduce the computational complexity in SNNs.
One common strategy is to reduce the number of timesteps.
This can be achieved by dynamically adjusting the number of timesteps based on input samples using confidence score thresholds \cite{li2023-unleashing, li2023-seenn}, dividing SNNs into multiple stages with progressively shrinking timesteps \cite{ding2024-shrinking}, or introducing stochastic latency training \cite{anumasa2024-enhancing}.
Another approach is online training, where the gradients of backpropagation are truncated or approximated along the temporal axis \cite{xiao2022-online, meng2022-training, jiang2024-ndot}. A more direct method involves reducing surrogate gradients from multi-timestep to a single timestep \cite{suetake2023-s3nn}. Additionally, SSF stabilizes input and output activations by averaging them over time, reducing gradient computation along the time dimension \cite{wang2023-ssf}.
T-RevSNN accelerates training by deactivating the temporal dynamics of most spiking neurons and introducing multi-level reversible interactions
\cite{hu2024-highperformance}.

However, these methods often compromise accuracy (e.g., timestep compression, online training), rely on specific network modifications (e.g., T-RevSNN) or assumptions about stabilized spiking flow (e.g., SSF), and therefore lack generality, limiting their scalability to diverse tasks.

\subsection{Parallel Spiking Neurons}

Other approaches focus on developing spiking neuron models that enable parallel computation.
For instance, by leveraging the absolute refractory period (ARP) of neurons, the adaptive LIF model achieves constant sequential complexity over the ARP simulation length \cite{taylor2023-addressing}.
By separating the linear integration component from the non-linear spiking function, SPSN allows parallel computation across timesteps \cite{yarga2023-accelerating}. 
The PSN model further enhances parallelizability by eliminating the reset mechanism, leading to faster simulation speeds \cite{fang2023-parallel}.
Furthermore, PSU and its derivatives, IPSU and RPSU, facilitate parallel computation by decoupling integration and firing mechanisms \cite{li2024-parallel}.

Despite their efficiency, these methods often rely on assumptions, such as the presence of a refractory period or the removal of the reset mechanism, which fail to fully capture the dynamic behavior of LIF neurons.
Moreover, models like PSN and PSU introduce additional $O(T^2)$ complexity in learning parameters to maintain accuracy, making it difficult to adapt them for sequential models that handle variable-length sequences.

\subsection{Fixed-point in Neural Networks}

Fixed-point is crucial in neural network models, particularly in implicit layers and recurrent networks, where it occurs when outputs stabilize after multiple iterations \cite{bai2019-deep, elghaoui2021-implicit}.
Several studies have integrated fixed-point conditions into RNNs to enhance training efficiency and network stability \cite{wang2021-approximate, zhu2024-learning, lim2024-parallelizing}.
In SNNs, feedback connections cause the average firing rates to evolve toward an equilibrium state. 
Implicit differentiation of this equilibrium equation allows gradient computation without explicitly tracking the forward process \cite{xiao2021-training, xiao2023-spide}. 
However, this method requires a sufficient number of timesteps for the model to reach equilibrium, and the model’s expressive power is influenced by the depth of the weight-tied block.
Recent work, such as \citet{cao2025-efficient}, introduces MPIS-SNNs, which also can be viewed as a weight-tied block that leverages fixed-point theory.

Despite these advancements, fixed-point iterative training methods in SNNs still heavily rely on specific network architectures. 
In particular, the possibility of reformulating the dynamics of LIF neurons into a fixed-point iteration framework—allowing parallel training across timesteps—has not been explored.
As a result, achieving an architecture-independent fixed-point parallel training method for SNNs remains an open problem.

\begin{figure*}[ht]
	\centering
	\includegraphics[width=\textwidth]{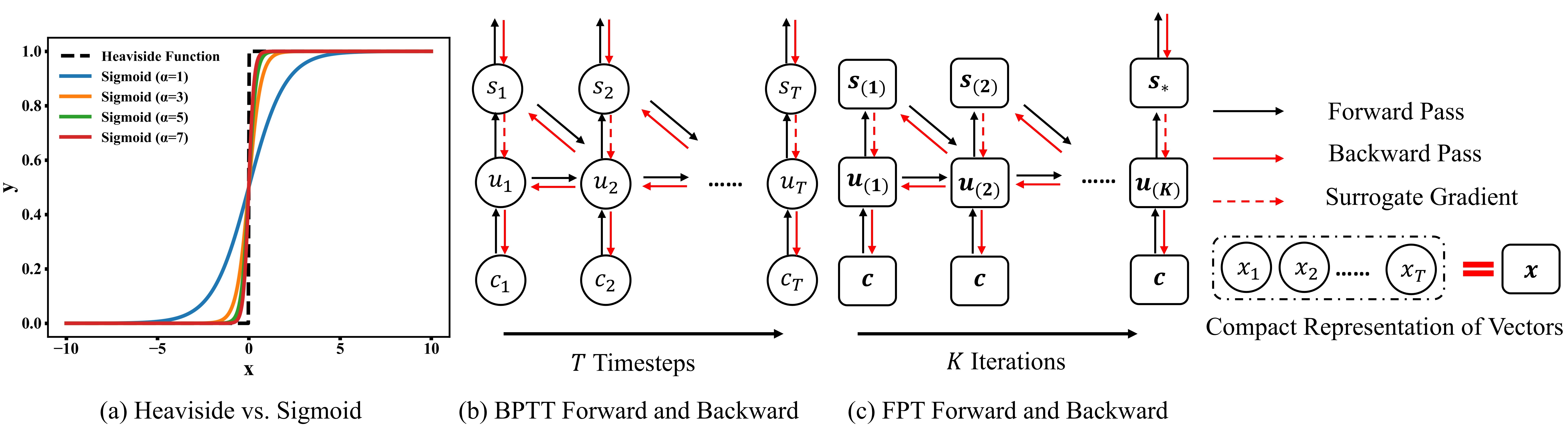}
	\caption{Comparison of the forward and backward procedures of BPTT and FPT. 
		In BPTT, the neuron processes sequentially over \(T\) timesteps, computing step-by-step through time. 
		In contrast, FPT processes all timesteps simultaneously in \(K\) iterations, where \(K \ll T\). 
		Circles represent scalars, and rounded rectangles represent vectors.}
	\label{fig:framework}
\end{figure*}

\section{Motivation}

Spiking neurons are the fundamental components of SNNs.
Among these, Leaky Integrate-and-Fire (LIF) neurons are widely used due to their simplicity in simulating the neuronal behavior. The dynamics of an LIF neuron are described by:
\begin{equation}\label{eq:LIF}
	u_{t} = \lambda (u_{t-1} - V_{th}s_{t-1}) + c_t,
\end{equation}
where \(u_{t}\) is the membrane potential at timestep \(t\), \(\lambda < 1\) is the decay factor, \(V_{th}\) is the threshold potential, \(s_{t}\) is the binary spike output, and \(c_t\) is the synaptic current.
When the membrane potential exceeds $V_{th}$, the neuron fires a spike:
\begin{equation}\label{eq:Heaviside}
	s_{t} = H(u_{t} - V_{th}),
\end{equation}
where $H(\cdot)$ is the Heaviside step function.
LIF neurons process inputs sequentially over discrete timesteps, as shown in Figure~\ref{fig:framework}(b).

To train SNNs, BPTT is used to backpropagate gradients through the inverse process of Eq.~\eqref{eq:LIF} and \eqref{eq:Heaviside}. 
The gradients for $T$ timesteps are computed as:
\begin{equation}
	\begin{aligned}
		\frac{\partial \ell}{\partial \mathbf{w}}
		&=\sum_{t=1}^{T}\frac{\partial \ell}{\partial s_t}\frac{\partial s_t}{\partial u_t}(\frac{\partial u_t}{\partial \mathbf{w}}\\
		&+\sum_{\tau< t}\prod_{i=\tau}^{t-1}(\frac{\partial u_{i+1}}{\partial u_{i}}+\frac{\partial u_{i+1}}{\partial s_{i}}\frac{\partial s_{i}}{\partial u_{i}})\frac{\partial u_{\tau}}{\partial \mathbf{w}}),
	\end{aligned}
\end{equation}
where $\mathbf{w}$ represents the network weights.
However, $\frac{\partial s_t}{\partial u_t}$ is non-differentiable due to the discontinuity of $H(\cdot)$.
To address this, the surrogate gradient method is applied, using functions like the sigmoid:
\begin{equation}
	S_{\alpha}(u)=\frac{1}{1+e^{-\alpha u}},
\end{equation}
where $\alpha$ controls the steepness of the approximation. 
As $\alpha \to \infty$, $S_{\alpha}(\cdot)$ approaches $H(\cdot)$, as shown in Figure \ref{fig:framework}(a).
The gradient of $H(\cdot)$ can be approximated as:
\begin{equation} 
	\frac{\partial s}{\partial u} \approx \frac{\partial S_{\alpha}(u)}{\partial u} = \alpha S_{\alpha}(u)(1 - S_{\alpha}(u)), 
\end{equation}
which smooths the optimization landscape, facilitating effective weight updates while preserving the spiking behavior of the neuron.

Consequently, training and inference in SNNs scale linearly with the number of timesteps $T$, resulting in a time complexity of $O(T)$.
This sequential nature poses challenges for real-world applications requiring long temporal dependencies or rapid decision-making.




\section{FPT: Fixed-point Parallel Training}

In this section, we introduce Fixed-point Parallel Training (FPT) method, a novel approach aimed at improving the efficiency of SNN training by leveraging fixed-point iterations and parallel processing.

\subsection{Fixed-point Mapping}

To accelerate the training process of SNNs, a straightforward approach is to unroll the states across all timesteps into vectors or matrices for parallel processing, thereby reducing the time complexity.
Specifically, assume the initial membrane potential \(u_0 = 0\), the recursive updates for the membrane potential over \(T\) timesteps can be approximated as follows:
\begin{equation}
	\left\{
	\begin{aligned}
		u_1 &=  c_1 \\
		u_2 &= \lambda \left( u_1 - s_{1} V_{th} \right) + c_2 \\
		u_3 &= \lambda \left( u_2 - s_{2} V_{th} \right) + c_3 \\
		&\vdots \\
		u_T &= \lambda \left( u_{T-1} - s_{T-1} V_{th} \right) + c_T
	\end{aligned}
	\right.
\end{equation}

Introducing vector and matrix notation, we define:
\begin{equation}
	\mathbf{u} = \begin{pmatrix}
		u_1 \\
		u_2 \\
		\vdots \\
		u_T
	\end{pmatrix}, \quad
	\mathbf{s} = \begin{pmatrix}
		s_1 \\
		s_2 \\
		\vdots \\
		s_T
	\end{pmatrix}, \quad
	\mathbf{c} = \begin{pmatrix}
		c_1 \\
		c_2 \\
		\vdots \\
		c_T
	\end{pmatrix},
\end{equation}
where \(\mathbf{u}\) represents the membrane potentials at different timesteps, \(\mathbf{s}\) is the vector of spike outputs, and \(c\) denotes the synaptic currents over time.
To capture the influence of past inputs, we introduce the decay matrix \(\mathbf{\Lambda}\):
\begin{equation}\label{eq:lambda}
	\mathbf{\Lambda} = \begin{pmatrix}
		\lambda^0 & 0  & \cdots & 0 \\
		\lambda^1 & \lambda^0  & \cdots & 0 \\
		\vdots & \vdots  & \ddots & \vdots \\
		\lambda^{T-1} & \lambda^{T-2} & \cdots & \lambda^0
	\end{pmatrix}.
\end{equation}
Here, \(\mathbf{\Lambda}\) is a lower triangular matrix, where each element in the \(i\)-th row and \(j\)-th column is \(\lambda^{i-j}\) for \(i \geq j\) and 0 otherwise. 
This matrix models the exponential decay of the past inputs on the current membrane potential, with \(\lambda\) being the decay factor.

Using these notations, the membrane potential across all timesteps for parallel iteration of LIF neurons can be compactly represented as:
\begin{equation}\label{eq:fixed point}
	\left\{
	\begin{aligned}
		\mathbf{u} &= -V_{th} (\mathbf{\Lambda} - \mathbf{I}) \mathbf{s} + \mathbf{\Lambda} \mathbf{c}, \\
		\mathbf{s} &= H(\mathbf{u} - V_{th}).
	\end{aligned}
	\right.
\end{equation}
The corresponding fixed-point mapping is:
\begin{equation}\label{eq:fixed-point mapping}
	\Phi(\mathbf{u})=-V_{th} (\mathbf{\Lambda} - \mathbf{I}) H(\mathbf{u} - V_{th}) + \mathbf{\Lambda} \mathbf{c}
\end{equation}
However, this cannot be directly used in optimization algorithms due to the discontinuity introduced by the Heaviside function $H(\cdot)$. 

\subsection{Surrogate Fixed-point Learning}

To ensure smooth convergence of the fixed-point iterative equation, we replace the Heaviside function $H(\cdot)$ with a surrogate function, such as the sigmoid approximation:
\begin{equation}\label{eq:approx fixed-point mapping}
	\Phi(\mathbf{u}) \approx \hat{\Phi}_{\alpha}(\mathbf{u})=-V_{th} (\mathbf{\Lambda} - \mathbf{I}) S_{\alpha}(\mathbf{u} - V_{th}) + \mathbf{\Lambda} \mathbf{c}
\end{equation}
As $\alpha \to \infty$, $\hat{\Phi}_{\alpha}(\cdot)$ converges to $\Phi(\cdot)$. 
However, for large values of $\alpha$, the function becomes too steep, which may hinder training due to sharp transitions.
Drawing from the concept of surrogate gradients, we use different approximation factors $\alpha$ for the forward and backward passes.
Specifically, we use $\alpha_{f}$ for the forward pass and $\alpha_{b}$ for the backward pass, ensuring $\alpha_{b} \leq \alpha_{f}$.
This ensures that the forward pass provides a sufficiently approximation to $\Phi(\cdot)$, while maintaining a stable gradient in the backward pass, facilitating more efficient learning.

\subsection{Parallel Training}
\subsubsection{Forward Pass}

During the forward pass, assuming convergence after \(K\) iterations, the membrane potential vector \(\mathbf{u}_{(K)}\) converges to an equilibrium point, denoted as \(\mathbf{u}_*\), where further iterations produce negligible changes.
The iterative forward propagation process is described as:
\begin{equation}\label{eq:iteration-forward}
	\left\{
	\begin{aligned}
		\mathbf{u}_{(1)} &= \mathbf{\Lambda}\mathbf{c}, \\
		\mathbf{u}_{(2)} &= -V_{th} (\mathbf{\Lambda}-\mathbf{I}) \mathbf{s}_{(1)} + \mathbf{\Lambda}\mathbf{c}, \\
		&\vdots \\
		\mathbf{u}_{(K)} &= -V_{th} (\mathbf{\Lambda}-\mathbf{I}) \mathbf{s}_{(K-1)} + \mathbf{\Lambda}\mathbf{c},
	\end{aligned}
	\right.
\end{equation}
\begin{equation}
	\mathbf{s}_{(k)} = S_{\alpha_f}(\mathbf{u}_{(k)} - V_{th}), \quad 1\leq k \leq K.
\end{equation}
Here, \(\mathbf{s}_{(k)}\) represents the intermediate spike outputs during the iterative process.
As shown in Figure~\ref{fig:framework}(c), this process preserves the LIF neuron dynamics, particularly the reset mechanism, which is absent in models like PSN \cite{fang2023-parallel}.

Upon convergence, the equilibrium membrane potential \(\mathbf{u}_*\) is computed, and the final spike outputs $\mathbf{s}_*$ can be determined as follows:
\begin{equation}\label{eq:parallel fire}
	\mathbf{s}_* = H(\mathbf{u}_{*} - V_{th}).
\end{equation}
Alternatively, a probabilistic firing mechanism that samples once can also be used, as proposed in \citet{ma2023-exploiting}:
\begin{equation}\label{eq:parallel probably fire}
	\mathbf{s}_* \sim Bernoulli(S_{\alpha_f}(\mathbf{u}_{*} - V_{th})).
\end{equation}
The forward propagation process is summarized in Algorithm~\ref{alg:forward_snn}.
It is worth noting that this forward algorithm can be further improved in several ways, such as early termination of iterations upon convergence or adapting $\alpha_f$ dynamically during the iterative process. 
Furthermore,
$\alpha_f$ can also be learnable. 
Detailed implementations and discussions are provided in the Appendix.

\begin{algorithm}[tb]
	\caption{Forward Pass of FPT}
	\label{alg:forward_snn}
	\begin{algorithmic}[1]
		\REQUIRE Input current $\mathbf{c}$, threshold potential $V_{th}$, decay factor $\lambda$, timesteps $T$
		\ENSURE Membrane potentials $\mathbf{u}_{*}$, spike outputs $\mathbf{s}_*$
		\STATE Initialize $\mathbf{u}_0 = \mathbf{s}_{(0)} = \mathbf{0}$
		\STATE Compute $\mathbf{\Lambda}$ based on Eq.~\eqref{eq:lambda}.
		\FOR {$k = 1$ to $K$}
		\STATE Compute $\mathbf{u}_{(k)}$ using:
		\[
		\mathbf{u}_{(k)} = -V_{th} (\mathbf{\Lambda}-\mathbf{I}) \mathbf{s}_{(k-1)} + \mathbf{\Lambda}\mathbf{c}
		\]
		\STATE Compute $\mathbf{s}_{(k)}$ using:
		\[\mathbf{s}_{(k)} = S_{\alpha_f}(\mathbf{u}_{(k)} - V_{th})\]
		\ENDFOR
		\STATE Set equilibrium membrane potential: \(\mathbf{u}_{*} = \mathbf{u}_{(k)}\)
		\STATE Compute spike outputs $\mathbf{s}_*$ based on Eq.~\eqref{eq:parallel fire} or~\eqref{eq:parallel probably fire}
	\end{algorithmic}
\end{algorithm}


\subsubsection{Backward Pass}

Due to the favorable convergence properties of LIF neurons, we set $K=3$ for most experiments, resulting in a minimal number of iterations.
Unlike traditional deep equilibrium models \cite{bai2019-deep, cao2025-efficient}, which rely on iterative methods to compute the inverse Jacobian at equilibrium, our approach directly employs automatic differentiation with surrogate gradients for backpropagation.
This is feasible due to the small value of $K$, simplifying the implementation.

Specifically, the loss gradient with respect to network weights $\mathbf{w}$ is:
\begin{equation}
	\frac{\partial \ell}{\partial \mathbf{w}}
	=\frac{\partial \ell}{\partial \mathbf{s}_*}\frac{\partial \mathbf{s}_*}{\partial \mathbf{u}_*}\frac{\partial \mathbf{u}_*}{\partial \mathbf{w}}
	=\frac{\partial \ell}{\partial \mathbf{s}_*}\frac{\partial \mathbf{s}_*}{\partial \mathbf{u}_*}\frac{\partial \mathbf{u}_{(K)}}{\partial \mathbf{w}}.
\end{equation}
Here, whether the spike outputs $\mathbf{s}_*$ are computed based on Eq.~\eqref{eq:parallel fire} or~\eqref{eq:parallel probably fire}, their surrogate gradients are:
\begin{equation}
	\frac{\partial \mathbf{s}_*}{\partial \mathbf{u}_*} \gets \frac{\partial S_{\alpha_b}(\mathbf{u}_{(K) - V_{th}})}{\partial \mathbf{u}_{(K)}}
\end{equation}
For $\frac{\partial \mathbf{u}_{(K)}}{\partial \mathbf{w}}$, it can be computed iteratively over $K$ steps as follows:
\begin{equation}\label{eq:iteration-backward}
	\left\{
	\begin{aligned}
		\frac{\partial \mathbf{u}_{(1)}}{\partial \mathbf{w}} &= \mathbf{\Lambda} \frac{\partial \mathbf{c}}{\partial \mathbf{w}}, \\
		\frac{\partial \mathbf{u}_{(2)}}{\partial \mathbf{w}} &= -V_{th} (\mathbf{\Lambda}-\mathbf{I}) \frac{\partial \mathbf{s}_{(1)}}{\partial \mathbf{u}_{(1)}}\frac{\partial \mathbf{u}_{(1)}}{\partial \mathbf{w}} + \mathbf{\Lambda} \frac{\partial \mathbf{c}}{\partial \mathbf{w}}, \\
		&\vdots \\
		\frac{\partial \mathbf{u}_{(K)}}{\partial \mathbf{w}} &= -V_{th} (\mathbf{\Lambda}-\mathbf{I}) \frac{\partial \mathbf{s}_{(K-1)}}{\partial \mathbf{u}_{(K-1)}}\frac{\partial \mathbf{u}_{(K-1)}}{\partial \mathbf{w}} + \mathbf{\Lambda} \frac{\partial \mathbf{c}}{\partial \mathbf{w}}.
	\end{aligned}
	\right.
\end{equation}
The surrogate gradient method is applied as follows:
\begin{equation}
	\frac{\partial \mathbf{s}_{(k)}}{\partial \mathbf{u}_{(k)}} 
	\gets \frac{\partial S_{\alpha_b}(\mathbf{u}_{(k)} - V_{th})}{\partial \mathbf{u}_{(k)}}
\end{equation}
The detailed backward propagation process is summarized in Algorithm~\ref{alg:backprop_snn}.
The time complexity for both the forward and backward passes of the FPT algorithm is $O(K)$, compared to the traditional $O(T)$ (\(K\ll T\)).
This decoupling from $T$ makes FPT particularly efficient for long-term tasks, enabling extended temporal processing at reduced computational costs.
Moreover, automatic differentiation eliminates the need to explicitly construct and invert Jacobian matrices, simplifying implementation and making the approach more practical.


\begin{algorithm}[tb]
	\caption{Backward Pass of FPT}
	\label{alg:backprop_snn}
	\begin{algorithmic}[1]
		\REQUIRE Input current $\mathbf{c}$, threshold potential $V_{th}$, decay factor $\lambda$, iterations $K$
		\ENSURE Compute the gradient \(\frac{\partial \mathbf{s}_*}{\partial \mathbf{w}}\)
		\STATE Compute the gradient $\frac{\partial \mathbf{s}_*}{\partial \mathbf{u}_*} \gets \frac{\partial S_{\alpha_b}(\mathbf{u}_{(K) - V_{th}})}{\partial \mathbf{u}_{(K)}}$
		\STATE Initialize the gradient $\frac{\partial \mathbf{u}_{(1)}}{\partial \mathbf{w}} \gets \mathbf{\Lambda} \frac{\partial \mathbf{c}}{\partial \mathbf{w}}$
		\FOR {$k = 2$ to $K$}
		\STATE Compute the gradient for the previous potential:
		\[\frac{\partial \mathbf{s}_{(k-1)}}{\partial \mathbf{u}_{(k-1)}} \gets \frac{\partial S_{\alpha_b}(\mathbf{u}_{(k-1)} - V_{th})}{\partial \mathbf{u}_{(k-1)}}\]
		\STATE Update the gradient for the current iteration:
		\[
		\frac{\partial \mathbf{u}_{(k)}}{\partial \mathbf{w}} \gets -V_{th} (\mathbf{\Lambda} - \mathbf{I}) \frac{\partial \mathbf{s}_{(k-1)}}{\partial \mathbf{u}_{(k-1)}} \frac{\partial \mathbf{u}_{(k-1)}}{\partial \mathbf{w}} + \mathbf{\Lambda} \frac{\partial \mathbf{c}}{\partial \mathbf{w}}
		\]
		\ENDFOR
		\STATE Compute the final gradient for the network weights:
		\[
		\frac{\partial \mathbf{s}_*}{\partial \mathbf{w}} \gets \frac{\partial \mathbf{s}_*}{\partial \mathbf{u}_{*}}\frac{\partial \mathbf{u}_{(K)}}{\partial \mathbf{w}}
		\]
	\end{algorithmic}
\end{algorithm}






\section{Theoretical Analysis}

In this section, we focus on three key research questions: whether convergence can be guaranteed and at what rate, how FPT relates to other parallel spiking neuron models, and what its computational complexity is.

\subsection{Analysis of Convergence Condition and Speed}

\begin{lemma}
	Assume the surrogate function \(S_{\alpha}(\cdot)\) is Lipschitz continuous with a constant \(L_{\alpha}\). 
	If the condition \( V_{th} L_{\alpha} \frac{\lambda (1 - \lambda^{T-1})}{1 - \lambda} < 1 \), where \( 0 < \lambda < 1 \), is satisfied, then the mapping \(\hat{\Phi}_{\alpha}(\mathbf{u})=-V_{th} (\mathbf{\Lambda} - \mathbf{I}) S_{\alpha}(\mathbf{u} - V_{th}) + \mathbf{\Lambda} \mathbf{c}\) is a contraction mapping under the 1-norm. 
	Consequently, the iterative scheme 
	\begin{equation}
		\mathbf{u}_{(k)} = -V_{th} (\mathbf{\Lambda}-\mathbf{I}) S_{\alpha}(\mathbf{u}_{(k-1)} - V_{th}) + \mathbf{\Lambda}\mathbf{c}
	\end{equation}
	converges to a unique fixed point \(\mathbf{u}_*\).
	\label{lem:converage}
\end{lemma}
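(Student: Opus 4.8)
The plan is to invoke the Banach fixed-point theorem, so the entire task reduces to showing that $\hat{\Phi}_{\alpha}$ is a contraction on $(\mathbb{R}^T, \|\cdot\|_1)$. Given two candidate membrane-potential vectors $\mathbf{u}, \mathbf{v} \in \mathbb{R}^T$, I would start from
\begin{equation}
	\|\hat{\Phi}_{\alpha}(\mathbf{u}) - \hat{\Phi}_{\alpha}(\mathbf{v})\|_1 = V_{th}\,\big\|(\mathbf{\Lambda} - \mathbf{I})\big(S_{\alpha}(\mathbf{u} - V_{th}) - S_{\alpha}(\mathbf{v} - V_{th})\big)\big\|_1,
\end{equation}
since the additive term $\mathbf{\Lambda}\mathbf{c}$ cancels. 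Then I would split this into two factors: the induced matrix $1$-norm $\|\mathbf{\Lambda} - \mathbf{I}\|_1$ via submultiplicativity, and the remaining vector norm, which is controlled by the componentwise Lipschitz bound $\|S_{\alpha}(\mathbf{u} - V_{th}) - S_{\alpha}(\mathbf{v} - V_{th})\|_1 \le L_{\alpha}\|\mathbf{u} - \mathbf{v}\|_1$ (summing the per-coordinate Lipschitz inequalities).

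The quantitative core is evaluating $\|\mathbf{\Lambda} - \mathbf{I}\|_1$. Since $\mathbf{\Lambda} - \mathbf{I}$ is strictly lower triangular with $(i,j)$-entry $\lambda^{i-j}$ for $i > j$ and zero otherwise, and the induced $1$-norm equals the maximum absolute column sum, the $j$-th column sum is $\sum_{i=j+1}^{T}\lambda^{i-j} = \sum_{k=1}^{T-j}\lambda^{k} = \frac{\lambda(1-\lambda^{T-j})}{1-\lambda}$. Because $0 < \lambda < 1$, this is increasing as $j$ decreases, so the maximum is attained at $j = 1$, giving $\|\mathbf{\Lambda} - \mathbf{I}\|_1 = \frac{\lambda(1-\lambda^{T-1})}{1-\lambda}$. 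Chaining the three bounds yields
\begin{equation}
	\|\hat{\Phi}_{\alpha}(\mathbf{u}) - \hat{\Phi}_{\alpha}(\mathbf{v})\|_1 \le V_{th} L_{\alpha}\,\frac{\lambda(1-\lambda^{T-1})}{1-\lambda}\,\|\mathbf{u} - \mathbf{v}\|_1,
\end{equation}
and the hypothesis makes the prefactor strictly less than $1$, so $\hat{\Phi}_{\alpha}$ is a contraction with that modulus.

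To finish, I would note that $\mathbb{R}^T$ with the $1$-norm is complete, apply the Banach fixed-point theorem to obtain existence and uniqueness of $\mathbf{u}_*$ with $\hat{\Phi}_{\alpha}(\mathbf{u}_*) = \mathbf{u}_*$, and observe that the stated iteration is exactly $\mathbf{u}_{(k)} = \hat{\Phi}_{\alpha}(\mathbf{u}_{(k-1)})$, hence it converges to $\mathbf{u}_*$ from any initialization, with geometric rate governed by the contraction modulus above (this last remark also answers the ``at what rate'' part of the section's research question).

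\textbf{Anticipated main obstacle.} None of the steps is deep; the only place requiring care is the computation of $\|\mathbf{\Lambda} - \mathbf{I}\|_1$ — specifically, correctly identifying that the induced $1$-norm is the max \emph{column} (not row) sum, and checking the monotonicity in $j$ so that the bound $\frac{\lambda(1-\lambda^{T-1})}{1-\lambda}$ (rather than a larger row-sum expression) is the sharp constant matching the lemma's hypothesis. A secondary subtlety is being explicit that the Lipschitz constant $L_{\alpha}$ for the scalar surrogate transfers to the $\ell_1$ setting with the same constant when $S_{\alpha}$ is applied entrywise.
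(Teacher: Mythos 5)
Your proposal is correct and follows essentially the same route as the paper's proof: bound $\|\hat{\Phi}_{\alpha}(\mathbf{u})-\hat{\Phi}_{\alpha}(\mathbf{v})\|_1$ by chaining the entrywise Lipschitz bound on $S_{\alpha}$ with an induced-norm bound on $\mathbf{\Lambda}-\mathbf{I}$, then invoke Banach's fixed-point theorem. The one (minor but favorable) difference is that you correctly pair the vector $1$-norm with the induced matrix $1$-norm (maximum column sum), whereas the paper invokes H\"older's inequality and the maximum row sum norm $\|\mathbf{\Lambda}-\mathbf{I}\|_{\infty}$; for this particular Toeplitz-like matrix both quantities equal $\frac{\lambda(1-\lambda^{T-1})}{1-\lambda}$, so the final contraction modulus is identical.
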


To illustrate the convergence of the algorithm, consider the typical values \(V_{th} = 1\), \(\lambda = 0.25\), and the sigmoid function \(S_{\alpha}(\cdot)\), which is Lipschitz continuous with constant \(L_{\alpha} = \frac{\alpha}{4}\).
Substituting these values into the convergence condition:
\begin{equation} \label{eq:spec_condition}
	V_{th} L_{\alpha} \frac{\lambda (1 - \lambda^{T-1})}{1 - \lambda} = \frac{\alpha(1 - 0.25^{T-1})}{12} \leq \frac{\alpha}{12},
\end{equation}
We see that for any \(\alpha < 12\), this expression is always less than 1, thereby satisfying the convergence condition.
Moreover, the convergence proof considers the worst-case scenario.
In practice, the actual convergence condition is more relaxed due to the inherent sparsity of neural activity.
Additionally, this convergence condition also reflects the rate of convergence: smaller $\alpha$ leads to faster convergence.

\begin{table*}[!htbp]
	\centering
	\caption{Complexity Comparison of SNN Training Methods}
	\label{tab:theory-complexity}
		\begin{tabular}{ccccc}
			\toprule
			\textbf{Training Methods} & \textbf{Memory} & \textbf{Training Time} & \textbf{Inference Energy} & \textbf{Applicability} \\
			\midrule
			OTTT   \cite{xiao2022-online}              & $\mathcal{O}(L)$     & $\mathcal{O}(L T)$   & $\mathcal{O}(T)$     & Limited \\
			SLTT-$k$  \cite{meng2023-memoryand}           & $\mathcal{O}(L)$     & $\mathcal{O}(L k)$   & $\mathcal{O}(T)$     & Limited \\
			T-RevSNN turn-off \cite{hu2024-highperformance}   & $\mathcal{O}(L)$     & $\mathcal{O}(L)$     & $\mathcal{O}(1)$     & Limited \\
			T-RevSNN turn-on \cite{hu2024-highperformance}    & $\mathcal{O}(L)$     & $\mathcal{O}(T)$     & $\mathcal{O}(1)$     & Limited \\
			BPTT   \cite{zheng2021-going}              & $\mathcal{O}(L T)$   & $\mathcal{O}(L T)$   & $\mathcal{O}(T)$     & Unlimited \\
			\textbf{FPT (Ours)  }               &$\mathcal{O}(L T) + \lambda \mathcal{O}(L K T)$ & $\mathcal{O}(L K)$   & $\mathcal{O}(T)$     & Unlimited \\
			\bottomrule
		\end{tabular}
\end{table*}

	

Furthermore, the matrix \(\mathbf{\Lambda} - \mathbf{I}\) is strictly lower triangular, with a spectral radius of zero since all its eigenvalues are zero.
This property ensures rapid convergence, as the iterative process does not amplify errors through eigenvalues, thereby maintaining stability and guaranteeing efficient convergence.
Importantly, Eq.~\eqref{eq:spec_condition} shows that the convergence rate (or condition) can be decoupled from the timesteps $T$ by approximating the term $0.25^{T-1}$ as 0.
This indicates that the iteration is weakly influenced by $T$, ensuring rapid convergence even for longer timesteps.


\subsection{Comparison with Parallel Spiking Neuron} \label{sec:psn}

By modifying FPT slightly, we can adapt it to learnable LIF neurons, with the update rule in Eq.~\eqref{eq:fixed point} incorporating learnable parameters as follows:
\begin{equation}\label{eq:LParLIF}
	\left\{
	\begin{aligned}
		&\mathbf{u} = -V_{th}(\mathbf{A} - \mathbf{I}) \mathbf{s} + \mathbf{A} \mathbf{c}\\
		&\mathbf{s} = H(\mathbf{u} - \mathbf{B})
	\end{aligned}
	\right.
\end{equation}
Here, \(\mathbf{A} \in \mathbb{R}^{T \times T}\) is the learnable decay matrix, and \(\mathbf{B} \in \mathbb{R}^{T}\) is the learnable threshold vector.
This model can be trained using a forward and backward propagation algorithm similar to those in Eq.~\eqref{eq:iteration-forward} and \eqref{eq:iteration-backward}.

For a single fixed-point iteration, the update for the membrane potential and spike output can be expressed as:
\begin{equation}\label{eq:PSN}
	\left\{
	\begin{aligned}
		&\mathbf{u} = \mathbf{A} \mathbf{c}\\
		&\mathbf{{s}} = {H}(\mathbf{u} - \mathbf{B}).
	\end{aligned}
	\right.
\end{equation}
This formulation represents the standard Parallel Spiking Neuron (PSN) \cite{fang2023-parallel}.
Unlike the traditional LIF model, PSN removes the reset mechanism, which can affect the algorithm's accuracy and biological fidelity.

By setting the threshold vector in Eq.~\eqref{eq:LParLIF} to a fixed value \(V_{th}\) and performing two iterations, the Parallel Spiking Unit (PSU) model can be derived \cite{li2024-parallel}:
\begin{equation}
	\left\{
	\begin{aligned}
		&\mathbf{u} = -V_{th} (\mathbf{A}-\mathbf{I}) S_{\alpha}(\mathbf{A} \mathbf{c}- V_{th}) + \mathbf{A} \mathbf{c}\\
		&\mathbf{{s}} = {H}(\mathbf{u} - V_{th}),
	\end{aligned}
	\right.
\end{equation}
which further simplifies to:
\begin{equation}\label{eq:PSU}
	\mathbf{{s}} = {H}(\mathbf{A} \mathbf{c}-\mathbf{D}S_{\alpha}(\mathbf{A} \mathbf{c}- V_{th})- V_{th})
\end{equation}
where \(\mathbf{D}=V_{th} (\mathbf{A}-\mathbf{I})\).
This formulation incorporates feedback mechanisms where the previous state influences the current potential.
The term \(S_{\alpha}(\mathbf{A} \mathbf{c}- V_{th})\) represents an estimate of the spike output, used to adjust the membrane potential.
Therefore, PSN and PSU can be viewed as intermediate forms of FPT for learnable LIF neurons.

However, it is important to note that convergence is not guaranteed with just one or two iterations.
In these cases, the fixed-point iteration may fail to reach an equilibrium point, potentially leading to inaccurate simulations of LIF dynamics and degraded performance. 
Additionally, PSN and PSU introduce an extra $O(T^2)$ learnable parameter complexity, which limits their applicability to long sequences.

\subsection{Analysis of Computational Complexity }

Table~\ref{tab:theory-complexity} summarizes the theoretical complexity of various SNN training approaches.
Let $L$ denote the number of network layers, $k$ the truncated temporal length in SLTT-$k$, and $K$ the number of fixed-point iterations used in FPT. 
Typically, $K$ is a small constant (e.g., $K=3$) and remains independent of $T$, making its contribution negligible for large $T$. 
Thus, space complexity $\mathcal{O}(LKT)$
and time complexity $\mathcal{O}(LK)$ can be approximated as $\mathcal{O}(LT)$ and $\mathcal{O}(L)$, respectively. 
The coefficient $\lambda$ represents the fraction of memory introduced by the LIF component, which is the only part affected by FPT, while all other components of the network remain unchanged.

Notably, methods like OTTT, SLTT, and T-RevSNN accelerate training by truncating gradients or discarding temporal dependencies. While effective, such strategies limit their ability to model fine-grained temporal dynamics. 
In contrast, FPT preserves complete neuronal dynamics and enables efficient training without altering the model architecture, thereby offering broader applicability across diverse SNN models.

\begin{figure*}[tb]
	\centering
	\subfigure[Error vs. Number of Iterations]{
		\includegraphics[width=0.313\textwidth]{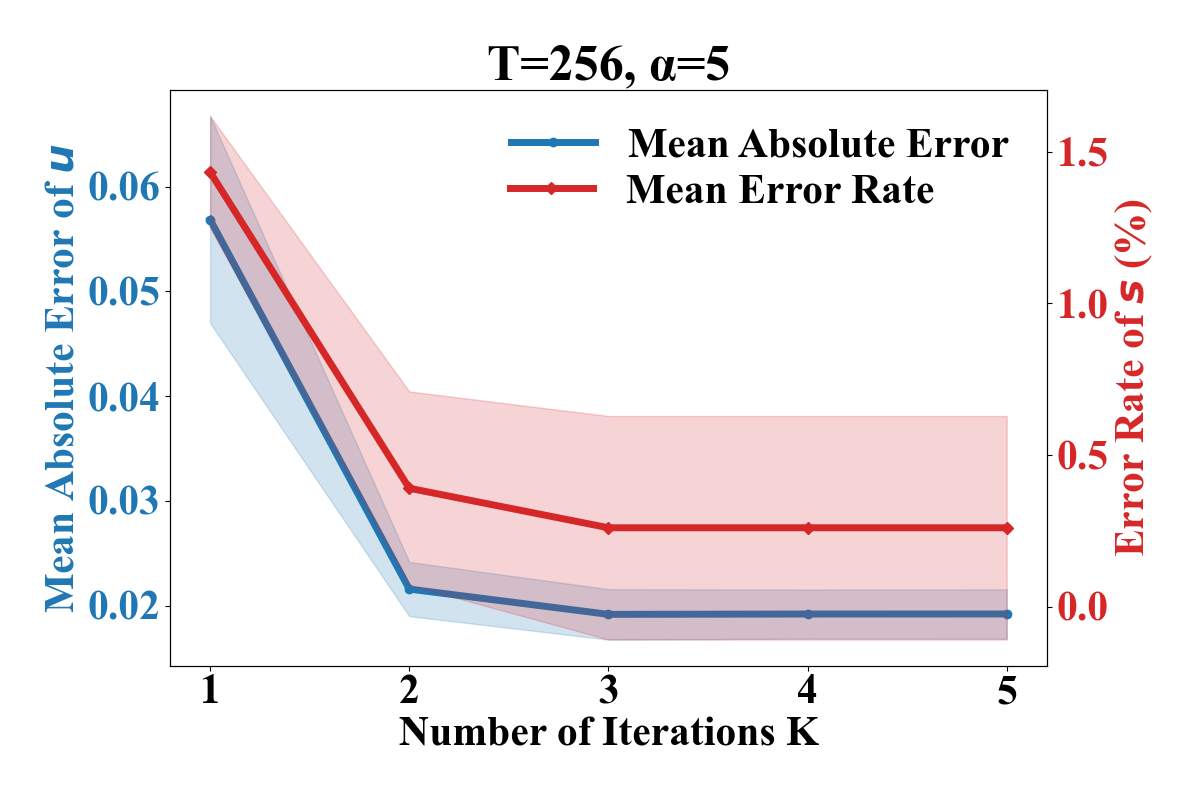}
		\label{subfig:iterations}
	}
	\subfigure[Error vs. Approximation Factor]{
		\includegraphics[width=0.313\textwidth]{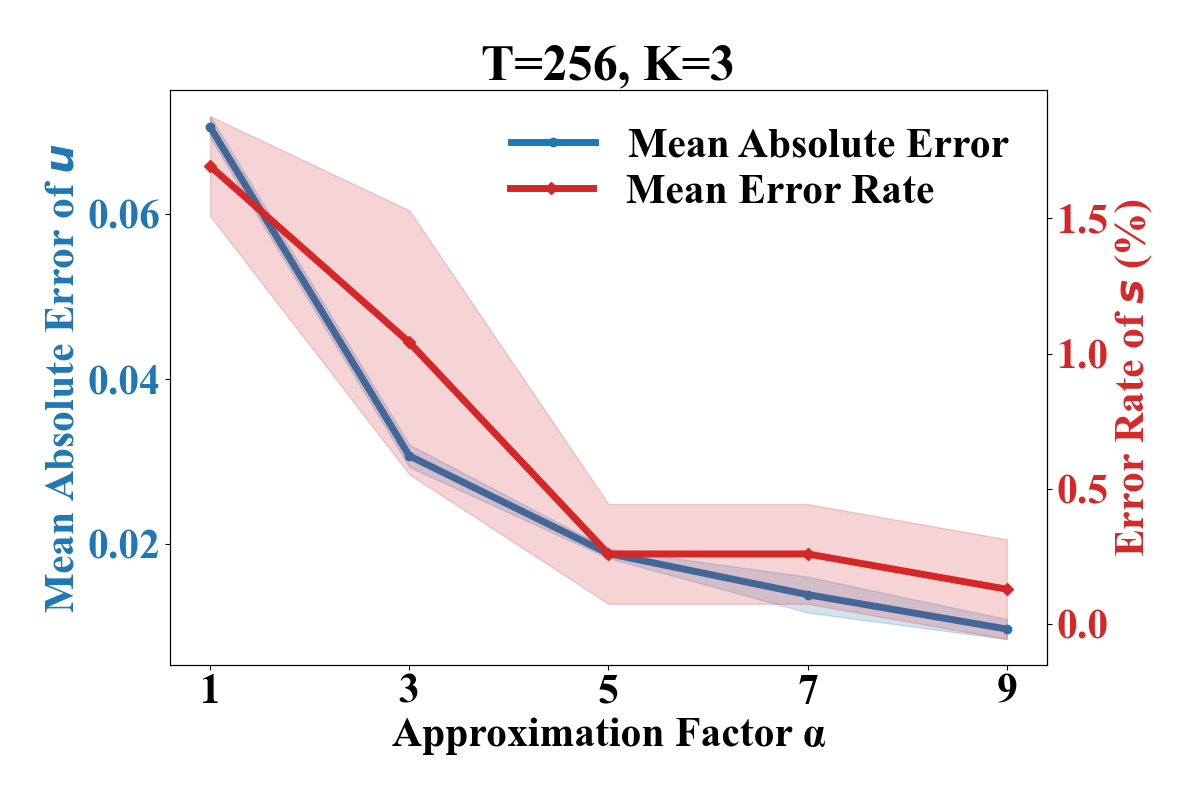}
		\label{subfig:alpha}
	}
	\subfigure[Error vs. Timesteps]{
		\includegraphics[width=0.313\textwidth]{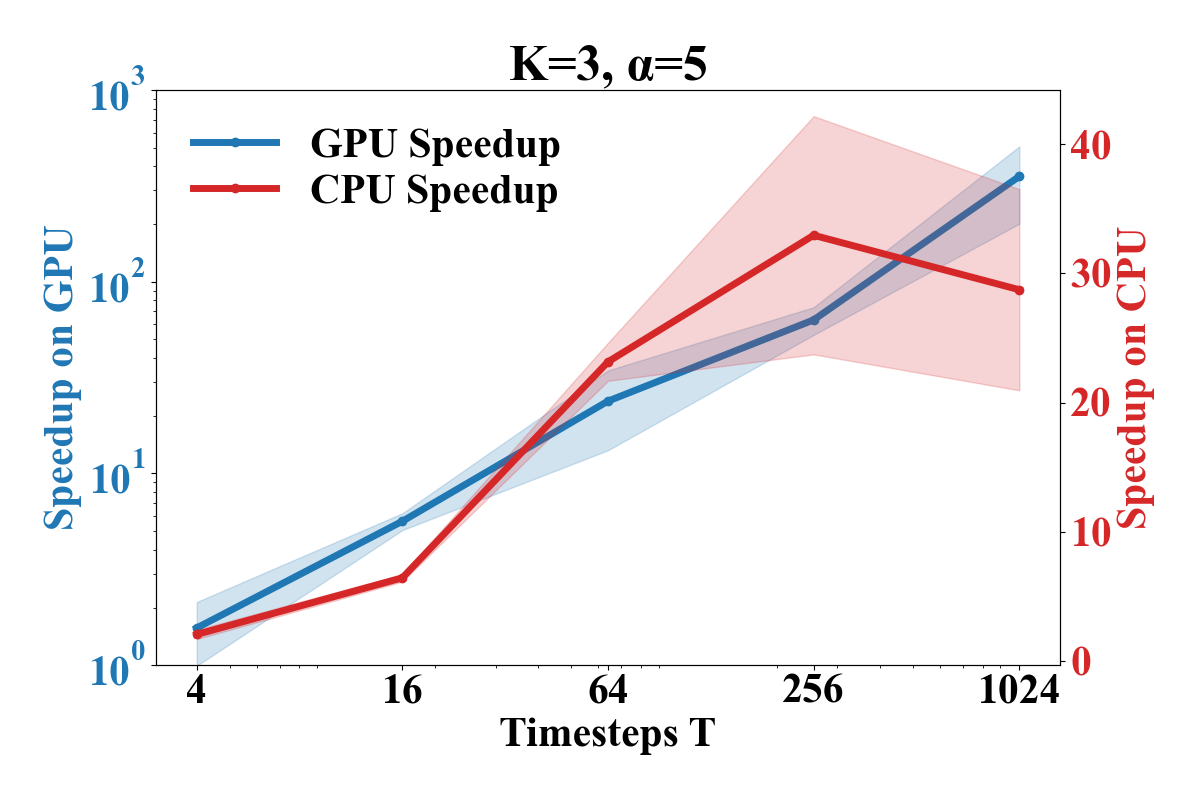}
		\label{subfig:timesteps}
	}
	
	\caption{Convergence and efficiency of the LIF neuron using our proposed FPT method. The absolute error in membrane potential, spike firing error rate, and speedup factor relative to the original LIF are reported. Solid lines show the mean values across three experiments, with shaded areas representing the standard deviation.}
	\label{fig:LIF_FPT}
\end{figure*}

\section{Experiments}
This section answers three key questions: First, does the equilibrium point of FPT converge to the original LIF dynamics? Second, can FPT improve training speed without compromising accuracy? Finally, is it necessary to preserve the complete neural dynamics of LIF neurons?


\begin{table*}[!htbp]
	\centering
	\caption{Comparison of FPT with previous SOTA training method. The average accuracy and standard deviation across three runs are reported, with the highest accuracy from the three runs shown in parentheses.}
	\vskip 0.15in
	\resizebox{1\textwidth}{!}{%
		\begin{tabular}{cccccc}
			\toprule
			\textbf{Dataset} & \textbf{Model} & \textbf{Method} & \textbf{Architecture} & \textbf{Training Timestep} & \textbf{Accuracy (\%)} \\ \midrule
			
			\multirow{9}{*}{DVS-CIFAR10}  
			& OTTT \cite{xiao2022-online} & Online training & VGG-11 & 10 & 76.63 \\ 
			& NDOT \cite{jiang2024-ndot} & Online training & VGG-11 & 10 & 77.40 \\ 
			& SLTT \cite{meng2023-memoryand} & Timestep Shrinkage & VGG-11 & 10 & 77.17 \\ 
			& SEENN \cite{li2023-seenn} & Timestep Shrinkage & VGG-16 & 10 & 82.70 \\        
			& SSNN \cite{ding2024-shrinking} & Timestep Shrinkage & VGG-9 & 8 & 78.57 \\
			& SLT \cite{anumasa2024-enhancing} & Timestep Shrinkage & VGG-11 & 10 & 81.46 \\
			& SSF \cite{wang2023-ssf} & Stabilized Spiking Flow & VGG-11 & 20 & 78.00\\
			& T-RevSNN \cite{hu2024-highperformance} & Temporal Reversible & ResNet-18 & 16 & 79.20\\
			& LocalZO \cite{mukhoty2023-direct} & Zeroth Order & VGG-11 & 10 & 81.87\\
			& \textbf{FPT (Ours)} & Parallel Training & VGG-11 & 10 ($K=3$) & 85.50$\pm$0.22 (\textbf{85.70}) \\
			\midrule
			
			\multirow{5}{*}{DVS-Gesture} 
			& SLTT \cite{meng2023-memoryand} & Timestep Shrinkage & VGG-11 & 20 & 97.92 \\
			& SSNN \cite{ding2024-shrinking} & Timestep Shrinkage & VGG-9 & 8 & 94.91 \\ 
			& T-RevSNN \cite{hu2024-highperformance} & Temporal Reversible & ResNet-18 & 16 & 97.90\\
			& LocalZO \cite{mukhoty2023-direct} & Zeroth Order & VGG-11 & 10 & 98.43\\
			& \textbf{FPT (Ours)} & Parallel Training & VGG-11 & 20 ($K=3$) & 98.38$\pm$0.17 (\textbf{98.61})\\ 
			\midrule
			
			
			\multirow{3}{*}{ImageNet-100} 
			& EfficientLIF-Net \cite{kim2023-sharing} & Normal BPPT & ResNet-19 & 5 & 79.44 \\ 
			& LocalZO \cite{mukhoty2023-direct} & Zeroth Order & SEW-Resnet34 & 4 & 81.56\\
			& \textbf{FPT (Ours)} & Parallel Training & SEW-Resnet34 & 4 ($K=3$) & 83.27$\pm$0.16 (\textbf{83.48}) \\ 

			\bottomrule
		\end{tabular}
	}
	\label{tab:general_comparison}
\end{table*}

\subsection{LIF Dynamics Simulation}

In this section, we validate whether the equilibrium point of the forward pass in FPT converges to the original LIF dynamics.
We used a LIF neuron with a decay coefficient $\lambda=0.25$,  a threshold $V_{th}=1$, and input currents drawn from a Gaussian distribution with a mean of 0 and a variance of 1.
The experiments were conducted on an RTX 3060 laptop GPU and a 12th Gen Intel i7-12700H CPU.

As shown in Figure~\ref{fig:LIF_FPT}(a), with increasing iterations, the equilibrium point of FPT quickly converges to the original LIF dynamics, including both membrane potential and spike behavior.
This demonstrates the high biological fidelity of FPT, retaining the key characteristics of the LIF neuron.
For \(K \geq 3\), changes in membrane potential and spike activity become negligible, indicating equilibrium. 
Based on this observation, we chose \(K = 3\) for all subsequent experiments.
Figure~\ref{fig:LIF_FPT}(b) highlights the impact of the sigmoid approximation parameter \(\alpha\) on convergence. 
With a fixed number of iterations, increasing \(\alpha\) reduces the discrepancy between the parallel and original LIF dynamics, approaching zero.
This aligns with Figure~\ref{fig:framework}(a), showing that larger \(\alpha\) values improve approximation accuracy. 
Finally, Figure~\ref{fig:LIF_FPT}(c) demonstrates the time efficiency of FPT on GPU and CPU platforms.
As \(T\) increases, the GPU-based FPT achieves linear speedup, outperforming traditional methods.
For \(T = 512\) or longer, FPT-based LIF models achieve over 100$\times$ speedup on GPUs, underscoring the effectiveness of FPT for long-term simulations.

\subsection{General Classification Performance of FPT}

To evaluate the effectiveness of FPT compared to different training methods, we tested it on three datasets: the dynamic DVS-CIFAR10 and DVS-Gesture datasets, and the static ImageNet-100 dataset, as shown in Table~\ref{tab:general_comparison}.
Detailed experimental settings are provided in the Appendix.
On the DVS-CIFAR10 dataset, FPT achieved an improvement of over 3$\%$ in accuracy compared to the baselines. 
In contrast, methods such as online training and timestep shrinkage, which rely on gradient truncation or reducing timesteps, yielded inferior accuracy. 
On the DVS-Gesture dataset, our method achieved an accuracy comparable to LocalZO.
For the static ImageNet-100 dataset, FPT outperformed LocalZO. 
This improvement in performance is attributed to the enhanced generalization capability provided by our surrogate fixed-point learning framework.

\begin{figure}[tb]
	\centering
	\includegraphics[width=1\columnwidth]{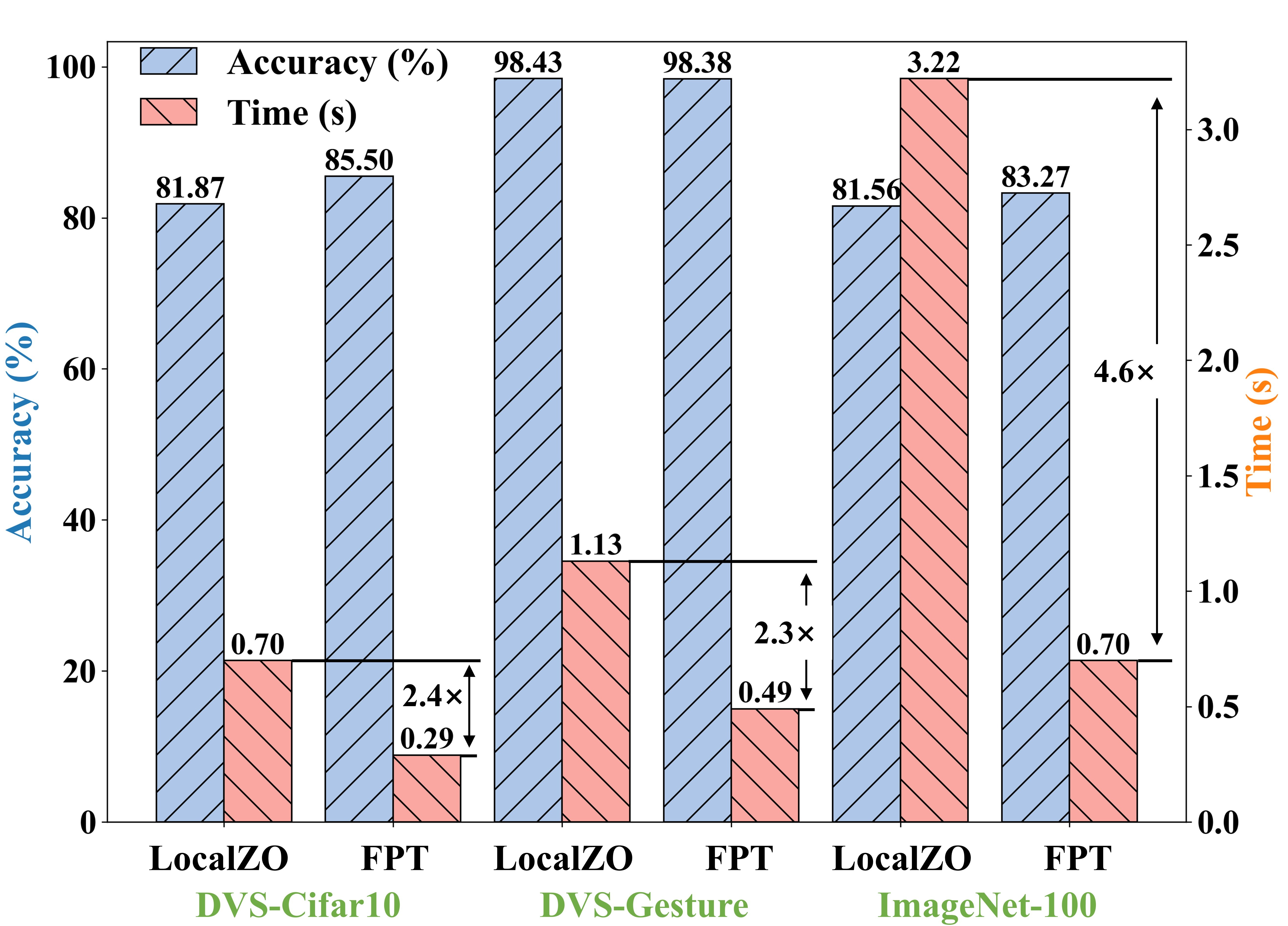}
	\caption{Comparison of accuracy and speed between FPT and the BPTT-like algorithm (LocalZO). Time indicates the average training time per batch on a single RTX 3090 GPU, with both models using the same architecture and hyperparameters.}
	\label{fig:FPT_vs_LocalZO}
\end{figure}

Figure~\ref{fig:FPT_vs_LocalZO} compares FPT with LocalZO in terms of both accuracy and speed across the datasets.
Since FPT reduces time complexity from $O(T)$ to $O(K)$, with $K=3$, and it also avoids the multiple sampling required by LocalZO.
As a result, our algorithm achieved more than twice the speed of LocalZO while maintaining comparable or even superior performance.

\subsection{Efficiency Comparison with BPTT}

\begin{table}[htb]
	\centering
	\caption{Comparison of FPT and BPTT with different timesteps $T$. The p-values from t-tests show that accuracy differences are not statistically significant. FPT accuracy is reported across three random trials. Time indicates the average training time per batch on a single RTX 3090 GPU.}
	\vskip 0.15in
	\label{tab:seq_para_comparison}
	\resizebox{1\columnwidth}{!}{%
		\begin{tabular}{ccccccc}
			\toprule
			\textbf{Method} & \textbf{Timesteps} & \textbf{Firing rate (\%)} & \textbf{Accuracy (\%)} & \textbf{Time (ms)} & \textbf{P-value} \\ \midrule
			
			\multirow{3}{*}{\centering BPTT}     & 8 &12.24$\pm$0.01 &90.95$\pm$0.17 & 6.49 & -\\     
			&  32 &12.29$\pm$0.07 &92.15$\pm$0.33 & 21.96 & -\\ 
			& 128 &12.51$\pm$0.10 &92.66$\pm$0.34 & 74.04 & -\\ 
			\midrule
			\multirow{3}{*}{\centering \textbf{FPT}}  & 8 &12.33$\pm$0.09 & 90.77$\pm$0.38  & 1.55 & 0.512 \\ 
			& 32 &12.41$\pm$0.13 & 92.25$\pm$0.15  & 2.01 & 0.668\\ 
			& 128 &12.61$\pm$0.04 & 92.55$\pm$0.30 & 5.58 & 0.696\\
			
			\bottomrule
		\end{tabular}
	}
	\label{tab:FPT_BPTT_differ_T}
\end{table}

To further evaluate the efficiency of FPT across different timesteps $T$, we conducted experiments comparing it with the traditional BPTT on the Amazon Photos dataset using DRSGNN, which incorporates a layer of LIF neurons.
The experiments were performed on a single RTX 3090 GPU, with a batch size of 64. 
Except for the timesteps, all experimental settings followed those in \cite{zhao2024-dynamic}.

As shown in Table~\ref{tab:FPT_BPTT_differ_T}, both FPT and BPTT demonstrate improved accuracy as $T$ increases, indicating that longer timesteps allow the model to capture more temporal dynamics and achieve better performance.
However, the t-tests reveal that the differences in accuracy between FPT and BPTT are not statistically significant, with p-values consistently greater than 0.05.
Furthermore, regardless of whether BPTT or FPT is used, the firing rate remains around 12$\%$ across different timesteps.
Notably, FPT exhibits a significant efficiency advantage.
For $T\geq 32$, FPT is over 10 times faster than BPTT. 
This demonstrates FPT’s ability to dramatically accelerate training while maintaining performance comparable to that of BPTT.

\subsection{Ablation Study}

As introduced in Section~\ref{sec:psn}, PSN and IPSU can be viewed as intermediate results of our FPT with different values of $K$.
At the same time, they diminish the impact of the reset mechanism, with PSN removing it entirely.
To evaluate the impact of the number of iterations $K$ and the reset mechanism, we conducted an ablation study comparing various configurations of FPT, PSN, and IPSU on the sequential CIFAR10 and CIFAR100 datasets.
In these tasks, images are processed column by column over 32 timesteps, and the experimental settings follow those in \cite{fang2023-parallel}.

\begin{table}[htb]
	\centering
	\caption{
		Comparison on the sequential CIFAR datasets. 
		The subscript $_0$ and symbol $\times$ indicate that the matrix \(\mathbf{A}\) is set to \(\mathbf{\Lambda}\) and is non-learnable. 
		\(\sqrt{}\mkern-9mu{\smallsetminus}\) denotes that only the lower triangular part of the matrix \(\mathbf{A}\) is learnable to avoid using future information. 
		The table reports the accuracy of FPT across three random trials. 
		Other results are taken from \cite{fang2023-parallel} and \cite{li2024-parallel}.
	}
	\vskip 0.15in
	\resizebox{1\columnwidth}{!}{%
		\begin{tabular}{cccc}
			\toprule
			\textbf{Dataset} & \textbf{Method} & \textbf{Learnable} & \textbf{Accuracy (\%)} \\ \midrule
			
			\multirow{8}{*}{\centering Sequential CIFAR10} 
			& LIF  & $\times$ & 81.50  \\ 
			& PSN$_0$  & $\times$ & 79.80 \\ 
			& \textbf{FPT} & $\times$  & 81.54$\pm$0.34 \\
			& IPSU & $\sqrt{}\mkern-9mu{\smallsetminus}$ &  87.28  \\ 
			& Masked PSN & $\sqrt{}\mkern-9mu{\smallsetminus}$  & 85.81  \\ 
			& \textbf{FPT w/ Masked} \(\mathbf{A}\) & $\sqrt{}\mkern-9mu{\smallsetminus}$  & 87.48$\pm$0.14\\ 
			& PSN & $\checkmark$  & 88.45  \\ 
			& \textbf{FPT w/} \(\mathbf{A}\) & $\checkmark$  & \textbf{89.53$\pm$0.13} \\ 
			\midrule
			
			\multirow{8}{*}{\centering Sequential CIFAR100} 
			& LIF & $\times$  & 55.45  \\
			& PSN$_0$  & $\times$ & 53.12 \\ 
			& \textbf{FPT} & $\times$  & 55.19$\pm$0.37 \\
			& IPSU & $\sqrt{}\mkern-9mu{\smallsetminus}$ & 59.76  \\
			& Masked PSN & $\sqrt{}\mkern-9mu{\smallsetminus}$  & 60.69  \\
			& \textbf{FPT w/ Masked} \(\mathbf{A}\) & $\sqrt{}\mkern-9mu{\smallsetminus}$  & 62.24 $\pm$0.52\\
			& PSN & $\checkmark$ &  62.21  \\
			& \textbf{FPT w/} \(\mathbf{A}\) & $\checkmark$ & \textbf{64.50$\pm$0.34} \\ 
			\bottomrule
		\end{tabular}
	}
	\label{tab:cifar_comparison}
\end{table}

As shown in Table~\ref{tab:cifar_comparison}, when the decay matrix $\mathbf{A}$ is fixed and non-learnable, PSN$_0$ exhibits approximately 2$\%$ lower accuracy compared to LIF across both datasets, which aligns with previous results reported for PSN \cite{fang2023-parallel}.
The key distinction here is that PSN removes the reset mechanism, which is an essential feature in the traditional LIF model. 
Akin to the forget gate in LSTM networks, it regulates the retention of historical information by discarding past states after each spike.

In contrast, when the decay matrix \(\mathbf{A}\) and the firing threshold \(\mathbf{B}\) are learnable, FPT consistently outperforms both PSN and IPSU, achieving optimal results across both datasets.
This suggests that FPT with $K=3$ iterations outperform the PSN and IPSU models, which can be considered as having $K=1$ or $K=2$.
Additionally, for FPT without learnable parameters, the accuracies on both datasets are nearly identical to those of the original LIF, indicating that $K=3$ is sufficient to ensure convergence.
These findings highlight the advantages of our approach, underscoring the importance of both the reset mechanism and sufficient iterations for maintaining model accuracy.

\section{Conclusion}

In this paper, we introduce the Fixed-point Parallel Training (FPT) method, which significantly improves the efficiency of training SNNs.
By leveraging parallel fixed-point iterations of LIF neurons, FPT reduces the time complexity from $O(T)$ to $O(K)$, where $K$ is a small constant, leading to substantial speedup.
We demonstrate that FPT preserves the biological dynamics of LIF neurons and effectively converges to them in practical applications.
This equivalence allows for seamless conversion between sequential and parallel training modes, making FPT ideal for parallel training and subsequent deployment on neuromorphic devices, where inputs can be processed sequentially.
Furthermore, FPT outperforms BPTT in computational efficiency, achieving up to 100$\times$ speedup on GPUs for long-term tasks.
Importantly, FPT requires no modifications to the network architecture, ensuring its broad applicability to a wide range of SNN models. 
This makes FPT a highly scalable and adaptable solution for efficient SNN training, especially in applications where performance, speed, and biological fidelity are critical.

\section*{Acknowledgements}
This work was supported in part by Science and Technology Innovation (STI) 2030---Major Projects under Grant 2022ZD0208700, and National Natural Science Foundation of China under Grant 62376264.

\section*{Impact Statement}

This paper presents work whose goal is to advance the field of 
Machine Learning. There are many potential societal consequences 
of our work, none which we feel must be specifically highlighted here.

			
			


\bibliography{main_paper}
\bibliographystyle{icml2025}

\newpage
\appendix
\onecolumn
\section{Convergence Proof for FPT}

\begin{lemma}
	Assume the substitution function \(S_{\alpha}(\cdot)\) is Lipschitz continuous with a constant \(L_{\alpha}\). 
	If the condition \( V_{th} L_{\alpha} \frac{\lambda (1 - \lambda^{T-1})}{1 - \lambda} < 1 \), where \( 0 < \lambda < 1 \), is satisfied, then the mapping \(\hat{\Phi}_{\alpha}(\mathbf{u})=-V_{th} (\mathbf{\Lambda} - \mathbf{I}) S_{\alpha}(\mathbf{u} - V_{th}) + \mathbf{\Lambda} \mathbf{c}\) is a contraction mapping under the 1-norm. 
	Consequently, the iterative scheme 
	\begin{equation}
		\mathbf{u}_{(k)} = -V_{th} (\mathbf{\Lambda}-\mathbf{I}) S_{\alpha}(\mathbf{u}_{(k-1)} - V_{th}) + \mathbf{\Lambda}\mathbf{c}
	\end{equation}
	converges to a unique fixed point \(\mathbf{u}_*\).
\end{lemma}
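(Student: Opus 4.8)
The plan is to invoke the Banach fixed-point theorem, so the whole task reduces to showing that $\hat{\Phi}_{\alpha}$ is a contraction on $(\mathbb{R}^T, \|\cdot\|_1)$ under the stated hypothesis. First I would take two arbitrary vectors $\mathbf{u}, \mathbf{v} \in \mathbb{R}^T$ and write
\begin{equation}
\hat{\Phi}_{\alpha}(\mathbf{u}) - \hat{\Phi}_{\alpha}(\mathbf{v}) = -V_{th}(\mathbf{\Lambda} - \mathbf{I})\bigl(S_{\alpha}(\mathbf{u} - V_{th}) - S_{\alpha}(\mathbf{v} - V_{th})\bigr),
\end{equation}
since the affine term $\mathbf{\Lambda}\mathbf{c}$ cancels. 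Then I would bound the $1$-norm by submultiplicativity: $\|\hat{\Phi}_{\alpha}(\mathbf{u}) - \hat{\Phi}_{\alpha}(\mathbf{v})\|_1 \le V_{th}\,\|\mathbf{\Lambda} - \mathbf{I}\|_1\,\|S_{\alpha}(\mathbf{u} - V_{th}) - S_{\alpha}(\mathbf{v} - V_{th})\|_1$, where $\|\cdot\|_1$ on the matrix is the induced (maximum absolute column sum) norm.

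The two remaining ingredients are: (i) the coordinatewise Lipschitz property of $S_{\alpha}$ gives $\|S_{\alpha}(\mathbf{u} - V_{th}) - S_{\alpha}(\mathbf{v} - V_{th})\|_1 \le L_{\alpha}\|\mathbf{u} - \mathbf{v}\|_1$ (this is immediate since $S_\alpha$ acts entrywise, so each coordinate difference is bounded by $L_\alpha$ times the corresponding coordinate difference, and summing preserves the inequality); and (ii) a computation of $\|\mathbf{\Lambda} - \mathbf{I}\|_1$. For (ii), note that $\mathbf{\Lambda} - \mathbf{I}$ is strictly lower triangular with $(i,j)$-entry $\lambda^{i-j}$ for $i > j$; the $j$-th column has nonzero entries $\lambda, \lambda^2, \dots, \lambda^{T-j}$, so its absolute column sum is $\sum_{m=1}^{T-j}\lambda^m = \frac{\lambda(1-\lambda^{T-j})}{1-\lambda}$, which is maximized at $j=1$, giving $\|\mathbf{\Lambda} - \mathbf{I}\|_1 = \frac{\lambda(1-\lambda^{T-1})}{1-\lambda}$. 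Combining, the Lipschitz constant of $\hat{\Phi}_{\alpha}$ is at most $V_{th} L_{\alpha}\frac{\lambda(1-\lambda^{T-1})}{1-\lambda}$, which is $< 1$ by hypothesis, so $\hat{\Phi}_{\alpha}$ is a contraction. Since $\mathbb{R}^T$ with the $1$-norm is complete, Banach's theorem yields a unique fixed point $\mathbf{u}_*$ and convergence of the iteration $\mathbf{u}_{(k)} = \hat{\Phi}_{\alpha}(\mathbf{u}_{(k-1)})$ to it from any starting point.

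I do not expect a genuine obstacle here; the argument is routine once the pieces are assembled. The only point requiring mild care is the column-sum computation for $\|\mathbf{\Lambda} - \mathbf{I}\|_1$ — one must use the \emph{induced} $1$-norm (max column sum) rather than, say, confusing it with the max row sum or the entrywise $\ell_1$ norm — and correctly identify that the first column ($j=1$) attains the maximum because it contains the most terms of the geometric series. A secondary subtlety worth a sentence is justifying that the entrywise-defined surrogate $S_\alpha$ being scalar-Lipschitz with constant $L_\alpha$ indeed transfers to an $\|\cdot\|_1 \to \|\cdot\|_1$ Lipschitz bound with the same constant; this follows directly from the triangle inequality applied coordinatewise. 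Everything else is a direct application of the contraction mapping principle.
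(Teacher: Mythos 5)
Your proposal is correct and follows essentially the same route as the paper: cancel the affine term $\mathbf{\Lambda}\mathbf{c}$, push the entrywise Lipschitz bound on $S_{\alpha}$ through the matrix $-V_{th}(\mathbf{\Lambda}-\mathbf{I})$, evaluate the resulting geometric sum, and invoke the Banach fixed-point theorem. The one place you diverge is actually an improvement in rigor: the paper bounds $\|(\mathbf{\Lambda}-\mathbf{I})\mathbf{x}\|_1$ by $\|\mathbf{\Lambda}-\mathbf{I}\|_\infty\|\mathbf{x}\|_1$ (citing H\"older), which is not a valid pairing in general, whereas you correctly use the induced $1$-norm (maximum absolute column sum); the two happen to coincide here because $\mathbf{\Lambda}-\mathbf{I}$ is lower-triangular Toeplitz, so its maximal row sum and maximal column sum are both $\sum_{i=1}^{T-1}\lambda^i = \frac{\lambda(1-\lambda^{T-1})}{1-\lambda}$, and the final contraction constant is identical.
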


\begin{proof}
	
	Let \(\mathbf{u}_1, \mathbf{u}_2 \in \mathbb{R}^T\). 
	Considering the 1-norm, we have:
	\begin{equation}
		\begin{aligned}
			&\|\hat{\Phi}_{\alpha}(\mathbf{u}_1) - \hat{\Phi}_{\alpha}(\mathbf{u}_2)\|_1\\
			&= \|-V_{th} (\mathbf{\Lambda} - \mathbf{I}) S_{\alpha}(\mathbf{u}_1 - V_{th}) + \mathbf{\Lambda} \mathbf{c} \nonumber - (-V_{th} (\mathbf{\Lambda} - \mathbf{I}) S_{\alpha}(\mathbf{u}_2 - V_{th}) + \mathbf{\Lambda} \mathbf{c})\|_1 \\
			&= \| -V_{th} (\mathbf{\Lambda} - \mathbf{I}) (S_{\alpha}(\mathbf{u}_1 - V_{th}) - S_{\alpha}(\mathbf{u}_2 - V_{th})) \|_1.
		\end{aligned}
	\end{equation}
	
	Since \(S_{\alpha}(\mathbf{u})\) is Lipschitz continuous with a constant \(L_H\), we have:
	\begin{equation}
		\|S_{\alpha}(\mathbf{u}_1 - V_{th}) - S_{\alpha}(\mathbf{u}_2 - V_{th})\|_1 \leq L_{\alpha} \|\mathbf{u}_1 - \mathbf{u}_2\|_1.
	\end{equation}
	
	Using Hölder's inequality \cite{cheung2001-generalizations}, we get:
	\begin{equation}
		\begin{aligned}
			\|\hat{\Phi}_{\alpha}(\mathbf{u}_1) - \hat{\Phi}_{\alpha}(\mathbf{u}_2)\|_1 \leq V_{th} \| (\mathbf{\Lambda} - \mathbf{I}) \|_\infty \cdot L_{\alpha} \|\mathbf{u}_1 - \mathbf{u}_2\|_1
			= L \|\mathbf{u}_1 - \mathbf{u}_2\|_1,
		\end{aligned}
	\end{equation}
	where \(L = V_{th} L_{\alpha} \| (\mathbf{\Lambda} - \mathbf{I}) \|_\infty \).
	
	We use the maximum row sum norm (infinity norm) for \(\mathbf{\Lambda} - \mathbf{I}\):
	\begin{equation}
		\| \mathbf{\Lambda} - \mathbf{I} \|_{\infty} = \sum_{i=1}^{T-1}\lambda^i = \frac{\lambda (1 - \lambda^{T-1})}{1 - \lambda}.
	\end{equation}
	
	According to the Banach fixed-point theorem, if the condition \(L = V_{th} L_H \frac{\lambda (1 - \lambda^{T-1})}{1 - \lambda} < 1 \) is satisfied, then \(\hat{\Phi}_{\alpha}(\mathbf{u})\) is a contraction mapping \cite{shukla2016-generalized}. 
	Consequently, the iterative scheme 
	\begin{equation} \label{eq:fixed_iter}
		\mathbf{u}_{(k)} = -V_{th} (\mathbf{\Lambda}-\mathbf{I}) S_{\alpha}(\mathbf{u}_{(k-1)} - V_{th}) + \mathbf{\Lambda}\mathbf{c}
	\end{equation}
	converges to a unique fixed point \(\mathbf{u}_*\).
	
\end{proof}

\section{FPT Variants}

\begin{algorithm}[htb]
	\caption{Forward Pass of FPT with Early Stop for $\mathbf{u}$}
	\label{alg:forward_snn_early_stop}
	\begin{algorithmic}[1]
		\REQUIRE Input current $\mathbf{c}$, threshold potential $V_{th}$, decay factor $\lambda$, timesteps $T$, tolerance $\epsilon$
		\ENSURE Membrane potentials $\mathbf{u}_{*}$, spike outputs $\mathbf{s}_*$
		\STATE Initialize $\mathbf{u}_0 = \mathbf{s}_{(0)} = \mathbf{0}$
		\STATE Compute $\mathbf{\Lambda}$ based on Eq.~\eqref{eq:lambda}.
		\STATE Set $k = 1$
		\WHILE{$k \leq K$}
		\STATE Compute $\mathbf{u}_{(k)}$ using:
		\[
		\mathbf{u}_{(k)} = -V_{th} (\mathbf{\Lambda}-\mathbf{I}) \mathbf{s}_{(k-1)} + \mathbf{\Lambda}\mathbf{c}
		\]
		\STATE Compute $\mathbf{s}_{(k)}$ using:
		\[\mathbf{s}_{(k)} = S_{\alpha_f}(\mathbf{u}_{(k)} - V_{th})\]
		
		\IF{$\|\mathbf{u}_{(k)} - \mathbf{u}_{(k-1)}\|_2 < \epsilon$}
		\STATE Break \text{(early stop)}
		\ENDIF
		
		\STATE Increment $k \gets k + 1$
		\ENDWHILE
		\STATE Set equilibrium membrane potential: \(\mathbf{u}_{*} = \mathbf{u}_{(k)}\)
		\STATE Compute spike outputs $\mathbf{s}_*$ based on Eq.~\eqref{eq:parallel fire} or~\eqref{eq:parallel probably fire}
	\end{algorithmic}
	\label{alg:earlystop}
\end{algorithm}

\begin{algorithm}[h!]
	\caption{Forward Pass of FPT with Adaptive \(\alpha_f\)}
	\label{alg:forward_snn_alpha_list}
	\begin{algorithmic}[1]
		\REQUIRE Input current $\mathbf{c}$, threshold potential $V_{th}$, decay factor $\lambda$, timesteps $T$, list of \(\alpha_f\) values \(\{\alpha_{f1}, \alpha_{f2}, \dots, \alpha_{fK}\}\)
		\ENSURE Membrane potentials $\mathbf{u}_{*}$, spike outputs $\mathbf{s}_*$
		\STATE Initialize $\mathbf{u}_0 = \mathbf{s}_{(0)} = \mathbf{0}$
		\STATE Compute $\mathbf{\Lambda}$ based on Eq.~\eqref{eq:lambda}.
		\STATE Set \(k = 1\)
		\WHILE{$k \leq K$}
		\STATE Set \(\alpha_f = \alpha_{fk}\)  \quad \text{(select \(\alpha_f\) from the predefined list)}
		\STATE Compute $\mathbf{u}_{(k)}$ using:
		\[
		\mathbf{u}_{(k)} = -V_{th} (\mathbf{\Lambda}-\mathbf{I}) \mathbf{s}_{(k-1)} + \mathbf{\Lambda}\mathbf{c}
		\]
		\STATE Compute $\mathbf{s}_{(k)}$ using:
		\[
		\mathbf{s}_{(k)} = S_{\alpha_f}(\mathbf{u}_{(k)} - V_{th})
		\]
		\STATE Increment \(k \gets k + 1\)
		\ENDWHILE
		\STATE Set equilibrium membrane potential: \(\mathbf{u}_{*} = \mathbf{u}_{(k)}\)
		\STATE Compute spike outputs $\mathbf{s}_*$ based on Eq.~\eqref{eq:parallel fire} or~\eqref{eq:parallel probably fire}
	\end{algorithmic}
\end{algorithm}

In some cases, the LIF neuron does not require \(K\) iterations according to Eq.~\eqref{eq:fixed_iter} to converge. 
Therefore, we introduce an early stopping criterion based on the change in membrane potential, which can accelerate the training process, as shown in Algorithm ~\ref{alg:earlystop}. 
Specifically, if the change in membrane potential between iterations falls below a certain threshold $\epsilon$, the iteration can be terminated early.

Additionally, based on the convergence theorem of FPT, we observe that smaller values of \(\alpha_f\) lead to faster convergence, but with lower approximation accuracy. Conversely, larger values of \(\alpha_f\) provide more accurate approximations at the cost of slower convergence. 
A natural strategy is to start the iteration with a smaller \(\alpha_f\) for a good initial approximation, and then gradually increase \(\alpha_f\) for more accurate approximations, as shown in Algorithm~\ref{alg:forward_snn_alpha_list}, where \(\{\alpha_{f1} \leq \alpha_{f2} \leq \dots \leq \alpha_{fK}\}\).
Alternatively, \(\alpha_f\) could also be treated as a learnable parameter during training to dynamically adjust the balance between convergence speed and approximation accuracy.


%

\section{Experimental Setup}

For the FPT experiment on the Amazon Photos dataset using DRSGNN, we set $K=3$ and used Eq.~\eqref{eq:parallel fire} as the firing function.

\begin{table}[htb]
	\caption{Training parameters for FPT on various datasets. Optimizer: Adam with betas: (0.9, 0.999), Rate Scheduler: cosine annealing.}
	\vskip 0.15in
	\centering
		\begin{tabular}{cccc}
			\toprule
			& \textbf{DVS-CIFAR-10} & \textbf{DVS-Gesture} & \textbf{ImageNet-100} \\ 
			\midrule
			Number epochs                & 300 & 200 & 300 \\ 
			Mini batch size              & 32 & 32 & 64 \\ 
			T                            & 10 & 20 & 4 \\ 
			$\lambda$              & 0.5 & 0.5 & 0.5 \\ 
			$u_0$                  & 0 & 0 & 0 \\ 
			$V_{th}$               & 1 & 1 & 1 \\ 
			$\alpha_f$             & 3, 12, 12 & 3, 12, 12 & 2, 6, 12 \\ 
			$\alpha_b$             & $\alpha_f/3$  & $\alpha_f/3$  & $\alpha_f/3$  \\ 
			$K$                    & 3 & 3 & 3 \\ 
			$\lambda_{\text{TET}}$       & 0.05 & 0.05 & 1 \\ 
			Learning Rate                & 0.1 & 0.1 & 0.005 \\ 
			Optimizer                    & SGD & SGD & SGD \\ 
			Firing Mechanism             & Probabilistic firing & Probabilistic firing & Probabilistic firing \\ 
			\bottomrule
		\end{tabular}
	\label{tab:parameter_DVS}
\end{table}

Table \ref{tab:parameter_DVS} provides the hyperparameter settings corresponding to Table 1 in the main text.
The parameter $\alpha_f$ represents the incrementally increasing parameter used during each iteration.
According to Lemma \ref{lem:converage}, when $\alpha_f$ is relatively small, it facilitates convergence during the first iteration.
As the number of iterations increases, $\alpha_f$ gradually approaches the step function used by the LIF neuron.
In the backward pass, $\alpha_b=\alpha_f/3$, which is one-third of the forward pass value, helps to obtain a smoother gradient and avoid vanishing gradients.

\begin{table}[htb]
	\caption{Training parameters for FPT on Sequential CIFAR datasets.}
	\vskip 0.15in
	\centering
		\begin{tabular}{ccc}
			\toprule
			& \textbf{Sequential CIFAR10} & \textbf{Sequential CIFAR100} \\ 
			\midrule
			Number epochs                & 256 & 256 \\ 
			Mini batch size              & 128 & 128 \\ 
			T                            & 32 & 32 \\ 
			$\lambda$              & 0.5 & 0.5 \\ 
			$u_0$                  & 0 & 0 \\ 
			$u_{th}$               & 1 & 1 \\ 
			$\alpha_f$             & 1, 3, 12 & 1, 3, 12 \\ 
			$\alpha_b$             & $\alpha_f/3$  & $\alpha_f/3$ \\ 
			$K$                    & 3 & 3 \\ 
			Learning Rate                & 0.1 & 0.1 \\ 
			Optimizer                & SGD & SGD \\ 
			Firing Mechanism             & Probabilistic firing & Probabilistic firing \\
			\bottomrule
		\end{tabular}
	\label{tab:parameter_static}
\end{table}

As described in the PSN paper, the CIFAR dataset's images are processed by the SNN one column at a time, similar to how humans read from left to right. 
The corresponding hyperparameters are listed in Table \ref{tab:parameter_static}.

\section{Training and Inference Complexity Comparison}

\begin{table}[!htbp]
	\centering
	\caption{Comparison between BPTT and FPT at Different Simulation Timesteps $T$}
	\label{tab:bptt_vs_fpt}
		\begin{tabular}{lcccccc}
			\toprule
			\textbf{Method} & \boldmath$T$ & \textbf{Training Time (s)} & \textbf{Inference Time (s)} & \textbf{Memory (MB)} & \textbf{Accuracy (\%)} \\
			\midrule
			\multirow{3}{*}{BPTT} 
			& 8   & 0.0195 & 0.0042 & 600  & $97.75 \pm 0.16$ \\
			& 64  & 0.0835 & 0.0257 & 956  & $97.67 \pm 0.17$ \\
			& 512 & 1.701  & 0.2092 & 3238 & $97.84 \pm 0.12$ \\
			\midrule
			\multirow{3}{*}{\textbf{FPT}}  
			& 8   & 0.0096 & 0.0021 & 622  & $97.73 \pm 0.23$ \\
			& 64  & 0.0109 & 0.0021 & 1162 & $97.71 \pm 0.11$ \\
			& 512 & 0.0803 & 0.0021 & 4264 & $97.70 \pm 0.22$ \\
			\bottomrule
		\end{tabular}
\end{table}

Our experiments focus on comparing the complexity of BPTT and FPT at different timesteps \(T\), as shown in Table~\ref{tab:bptt_vs_fpt}. We conducted experiments on the MNIST dataset using a 3-layer MLP (784×256×128×10), with a batch size of 256, a learning rate of 0.001, and 80 epochs.

Here, ``Time'' refers to the average running time per batch during training or inference on a single A100 GPU. As shown, both training and inference times for BPTT increase approximately linearly with \(T\). In contrast, FPT’s training time increases only slightly, while its inference time remains nearly constant. At \(T=256\), FPT achieves a 21× speedup over BPTT during training.

FPT introduces only a slight increase in memory usage due to the LIF activation function during training, without impacting other network components. 
For larger networks such as MS-ResNet-18, the relative increase in memory consumption is even smaller.


\section{Network Dynamics Across Long Timesteps}
\begin{table}[htbp]
	\centering
	\caption{Cosine Similarity (\%) between Original and FPT Outputs under Different $\alpha$ and Simulation Timesteps $T$}
	\label{tab:alpha_accuracy}
		\begin{tabular}{lccc}
			\toprule
			\textbf{$\boldsymbol\alpha$} & \textbf{$T=8$} & \textbf{$T=64$} & \textbf{$T=512$} \\
			\midrule
			5 & 99.89 & 99.55 & 99.53 \\
			7 & 99.90 & 99.49 & 99.47 \\
			\bottomrule
		\end{tabular}
\end{table}

We replaced the LIF neurons in a pre-trained 3-layer LIF-based MLP (784×256×128×10) trained on the MNIST dataset with parallel LIF neurons based on FPT. 
Table~\ref{tab:alpha_accuracy} shows the cosine similarity (\%) between the original and FPT-replaced outputs for $T$=8, 64, and 512.

As \(T\) increases, the outputs of the FPT-based parallel LIF and the original LIF neurons show slight divergence due to error accumulation, reducing the similarity of the final network outputs. 
However, even at \(T=512\), the similarity remains above 99.5\%, demonstrating that FPT preserves a high degree of consistency in network dynamics. 
Furthermore, this minor discrepancy can be effectively mitigated through light fine-tuning.

\section{Learnable Decay Matrix Evaluation}
\begin{figure}[htbp]
	\centering
	\resizebox{0.5\columnwidth}{!}{%
		\includegraphics[width=\textwidth]{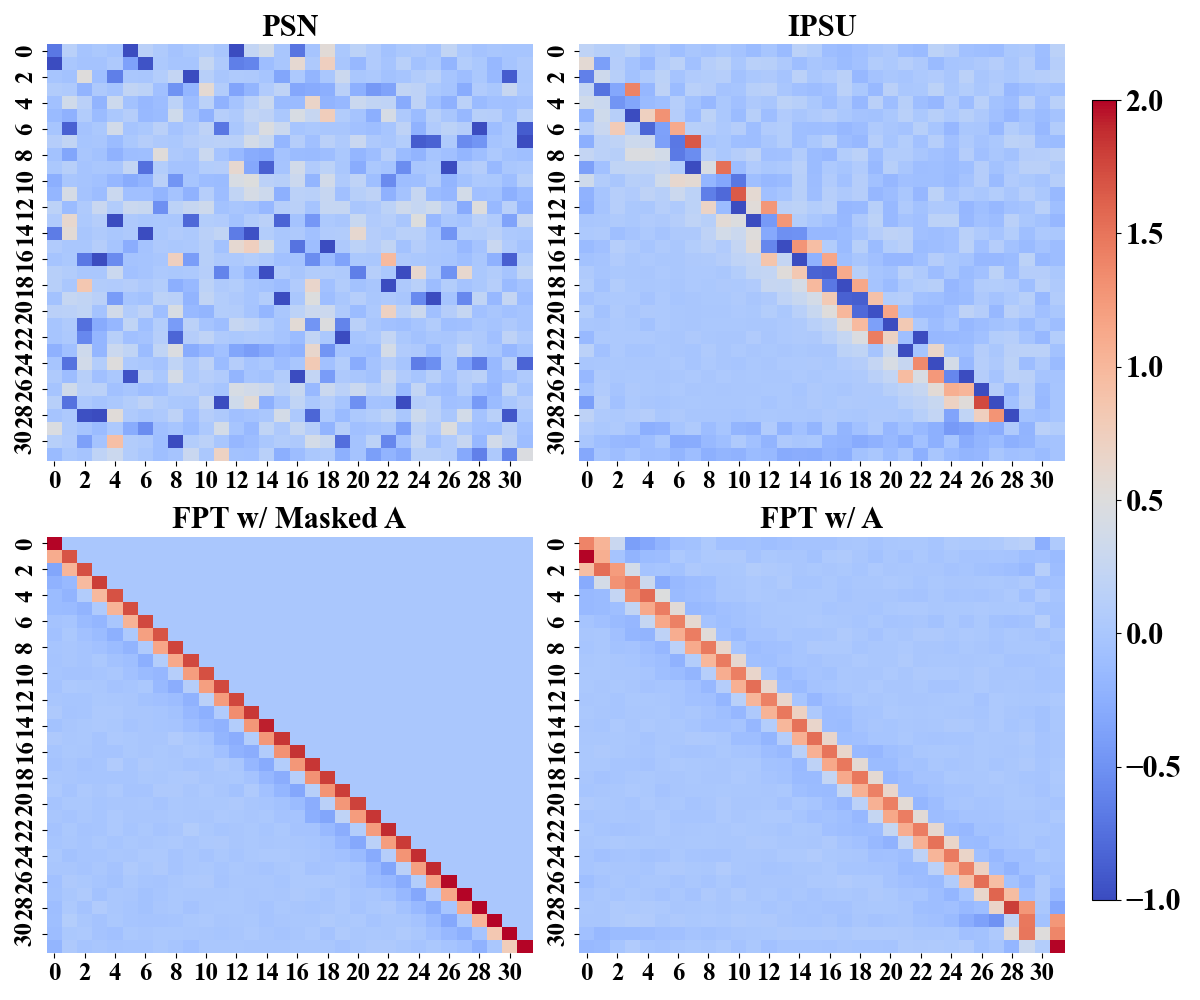}
	}
	\caption{Learned decay matrices for different parallel spiking neurons.}
	\label{fig:decay_matrix}
\end{figure}

Figure \ref{fig:decay_matrix} illustrates the final learned decay matrices of different parallel spiking neurons on the Sequential CIFAR10 dataset. 
Although PSN achieved an accuracy of 88.45\% on this dataset, it failed to learn an exponential decay time-dynamic pattern similar to \(\mathbf{\Lambda}\), overly relying on future time information.
Both IPSU and ``FPT w/ Masked \(\mathbf{A}\)'' applied a lower triangular mask during training to ensure that future information was not utilized. 
These models focused more on the present and immediate past, neglecting older information. 
Notably, ``FPT w/ Masked \(\mathbf{A}\)'' learned a more distinct exponential decay pattern.
For ``FPT w/ \(\mathbf{A}\)'', no constraints were imposed during training, resulting in a stronger focus on information from adjacent timesteps. 
The comparison of PSN with ``FPT w/ \(\mathbf{A}\)'' suggests that the reset mechanism favors learning reasonable temporal patterns rather than maintaining reliance on distant temporal information.


\section{Ablation Study of $K$}

\begin{table}[htb]
	\caption{Effect of Different Iteration Counts $K$ on the DVS-Gesture Dataset.}
	\vskip 0.15in
	\centering
		\begin{tabular}{cccc}
			\toprule
			& \textbf{$K=3$} & \textbf{$K=4$} & \textbf{$K=5$} \\ 
			\midrule
			\textbf{DVS-Gesture}               & 98.61 & 98.61 & 98.61 \\ 
			\bottomrule
		\end{tabular}
	\label{tab:differ_k}
\end{table}

To evaluate whether increasing the number of iterations $K$ improves performance, we conducted experiments with different iteration counts under the same settings, including random seeds. The accuracy results on the DVS-Gesture dataset are shown in Table~\ref{tab:differ_k}.
As seen, increasing the iteration count does not affect accuracy. This is because, for $K\geq 2$, with $\alpha=12$, the membrane potential $u_{(K)}$ converges by the time $K=3$, and no significant changes occur with additional iterations. Therefore, the accuracy remains unchanged.

			
			

\section{Discussion and Limitation}

\textbf{Motivation}:
Our primary goal was to develop a parallel training approach that is independent of the model architecture, ensuring that the neuronal dynamics remain unchanged throughout the training process. 

\textbf{Backward Process}:
The forward process in FPT is identical to the original LIF model, ensuring that the key dynamics are retained. However, the backward process does not directly correspond to traditional BPTT. 

\textbf{Reset Mechanism}:
The reset mechanism is essential for certain tasks, especially those that require the network to focus on the current input rather than be influenced by previous states. For example, in tasks where the input remains constant over time, the reset mechanism might not be necessary. However, for tasks involving long-term dependencies or highly variable inputs, retaining the reset mechanism becomes crucial. It allows the network to discard irrelevant past information and focus on new, incoming data.

\textbf{Flexibility and Compatibility}:
A key advantage of our algorithm is its flexibility.
It imposes no restrictions on specific network architectures and can be applied broadly across various SNN models. 
Importantly, since FPT preserves the original neuron dynamics, models trained using FPT remain fully compatible with standard sequential SNN inference, allowing seamless deployment on existing neuromorphic hardware without modification. 
This compatibility, combined with its efficiency, makes FPT a practical and powerful approach for pretraining SNN models, bridging the gap between scalable training and real-world application.

\textbf{Limitations}:
One limitation of FPT, compared to traditional SNN models, is its inability to handle scenarios where there is no time-dependent decay, as seen in Integrate-and-Fire models. 
However, more neuron models typically incorporate a decay factor or gating mechanism, which ensures that the influence of past inputs diminishes over time. This allows the model to focus on more recent data and prevents the accumulation of errors from earlier timesteps.
Another limitation of FPT lies in the memory usage associated with parallel processing. 
Since FPT processes all timesteps simultaneously, it generally requires more memory than sequential training methods, as shown in Table~\ref{tab:bptt_vs_fpt}.
Future work could focus on optimizing memory efficiency. For instance, leveraging the binary nature of spike events could lead to more efficient memory compression techniques, enabling faster and more memory-efficient processing without sacrificing model performance.

\end{document}